\pgfplotsset{compat=1.17}
\tikzset{every mark/.append style={scale=1.6, solid}, font=\small}
\pgfplotsset{
    width=1\textwidth,
    legend style={
        font=\scriptsize ,  
        inner xsep=1pt,
        inner ysep=1pt,
        nodes={inner sep=1pt}},
    legend cell align=left,
    every axis/.append style={line width=.5pt},
 	every axis plot/.append style={line width=1.5pt},
 	every axis y label/.append style={yshift=-4pt}
}
\begin{document}

\title{Non-negative Weighted DAG Structure Learning
\thanks{This work was supported in part by the Spanish AEI Grants PID2022-136887NB-I00, TED2021-130347B-I00, PID2023-149457OB-I00, and the Community of Madrid (Madrid ELLIS Unit).}
}

\author{
\IEEEauthorblockN{
Samuel Rey\IEEEauthorrefmark{1},
Seyed Saman Saboksayr\IEEEauthorrefmark{2},
Gonzalo Mateos\IEEEauthorrefmark{2},
} %
\IEEEauthorblockA{
\IEEEauthorrefmark{1}Dept. of Signal Theory and Communications, Rey Juan Carlos University, Madrid, Spain } %
\IEEEauthorblockA{
\IEEEauthorrefmark{2}Dept. of Electrical and Computer Eng., University of Rochester, Rochester, NY, United States } %
}

\maketitle
\begin{abstract}
We address the problem of learning the topology of directed acyclic graphs (DAGs) from nodal observations, which adhere to a linear structural equation model. Recent advances framed the combinatorial DAG structure learning task as a continuous optimization problem, yet existing methods must contend with the complexities of non-convex optimization. To overcome this limitation, we assume that the latent DAG contains only non-negative edge weights. Leveraging this additional structure, we argue that cycles can be effectively characterized (and prevented) using a convex acyclicity function based on the log-determinant of the adjacency matrix. This convexity allows us to relax the task of learning the non-negative weighted DAG as an abstract convex optimization problem. We propose a DAG recovery algorithm based on the method of multipliers, that is guaranteed to return a global minimizer. Furthermore, we prove that in the infinite sample size regime, the convexity of our approach ensures the recovery of the true DAG structure. We empirically validate the performance of our algorithm in several reproducible synthetic-data test cases, showing that it outperforms state-of-the-art alternatives.
\end{abstract}
\begin{IEEEkeywords}
DAG learning, network topology inference, causal discovery, graph signal processing, convex relaxation.
\end{IEEEkeywords}

\section{Introduction}\label{s:Introduction}
Directed acyclic graphs (DAGs) are crucial tools for modeling complex systems where directionality plays a key role~\cite{marques2020digraphs}, and they are widely recognized for their ability to represent causal relationships~\cite{peters2017elements, seifert2023causal}.
Consequently, DAGs and associated Bayesian networks have become increasingly common tools in biology~\cite{sachs2005causal,lucas2004bayesian}, genetics~\cite{zhang2013integrated}, machine learning~\cite{koller2009probabilistic,yu2019dag,rey2024convolutional}, signal processing~\cite{seifert2023causal,misiakos2024learning}, and causal inference~\cite{spirtes2001causation,yao2021survey}.
Despite their widespread adoption, often the DAG structure is not known in advance and must be inferred from data.

Learning a graph from nodal observations is a prominent problem rooted in the relation between the properties of the observed data and the graph topology~\cite{mateos2019connecting}.
Noteworthy approaches for undirected graphs include Gaussian graphical models~\cite{friedman2008sparse,egilmez2017graph,rey2023enhanced}, smoothness 
models~\cite{kalofolias2016learn,dong2016learning,saboksayr2021accelerated}, or graph stationary models~\cite{segarra2017network,shafipour2020online,roddenberry2021network,buciulea2022learning,navarro2024joint}, among others.
When the graph of interest is a DAG, structural equation models (SEMs) are often the method of choice~\cite{zheng2018dags,wei2020dags,bello2022dagma,saboksayr2023colide}.
Accounting for the acyclicity of the graph renders the DAG structure learning a challenging combinatorial, in fact NP-hard, endeavor~\cite{maxwell1997efficient,chickering2004large}. 

Recent works managed to circumvent the combinatorial nature of DAG structure learning by introducing a continuous relaxation that allows for efficient exploration of the DAG space.
A breakthrough in~\cite{zheng2018dags} advocated a continuous non-convex acyclicity constraint based on the matrix exponential.
This inspired further developments, including acyclicity functions based on powers of the adjacency matrix~\cite{wei2020dags,pamfil2020dynotears} and the log determinant~\cite{ng2020role,bello2022dagma,saboksayr2023colide}.
Despite significant progress, learning the DAG structure remains a challenging task involving a non-convex optimization problem.
Consequently, current methods rely on heuristics~\cite{bello2022dagma} and are content with estimates corresponding to local minima; see also~\cite{deng2023global}.

\vspace{2mm}
\noindent
\textbf{Contributions.}
To circumvent these non-convexity issues, 
we focus on the class of DAGs with non-negative edge weights and propose a \emph{convex acyclicity function} that enables recovering the global minimizer. To the best of our knowledge, this is the first work proposing a convex relaxation for DAG estimation.
We contribute the following technical innovations:
\begin{itemize}
    \item By leveraging the non-negativity of the DAG edge weights, we propose a convex log-determinant function to characterize the acyclicity of the graph.
    \item We cast the DAG learning task as an abstract convex optimization problem and propose an iterative algorithm based on the method of multipliers.
    This approach warrants the recovery of the global minimum.
    \item We prove that the proposed method recovers the true DAG structure when infinite observations are available. 
\end{itemize}

\section{Fundamentals of DAG structure learning}

Let $\ccalD = (\ccalV, \ccalE)$ denote a DAG, where $\ccalV$ is a set of $d$ nodes, and $\ccalE \subseteq \ccalV \times \ccalV$ is a set of directed edges.
An edge $(i, j) \in \ccalE$ exists if and only if there is a directed link from node $i$ to node $j$.
The connectivity of the DAG is captured by the weighted adjacency matrix $\bbW \in \reals^{d \times d}$, where $W_{ij} \neq 0$ if and only if $(i, j) \in \ccalE$.
Then, a graph signal is defined on the nodes of the DAG and is represented as a vector $\bbx \in \reals^d$, with $x_i$ denoting the signal value at node $i$.

The task of DAG structure learning involves inferring the topology of a DAG from a set of nodal observations. Collecting the $n$ observed signals in the matrix $\bbX := [\bbx_1,\ldots, \bbx_n] \in \reals^{d \times n}$, we suppose $\bbX$ adheres to a linear SEM given by 
\begin{equation}
	\bbX = \bbW^\top \bbX + \bbZ,
\end{equation}
where $\bbZ \!\in\! \reals^{d \times n}$ collects zero-mean exogenous noises whose columns are i.i.d. random vectors with covariance matrix $\bbSigma_\bbz = \sigma^2\bbI$.
Mutual independence of the noise variables is crucial~\cite[pp. 83-84]{peters2017elements}. 

With the previous definitions in place, the DAG structure encoded in $\bbW$ can be inferred from $\bbX$ by solving the optimization problem
\begin{equation}\label{eq:dag_learning} 
    \min_{\bbW}  F \left( \bbW, \bbX \right) \quad \mathrm{s. to } \quad \bbW \in \mbD,
\end{equation}
%
where $F(\bbW, \bbX)$ denotes a data-dependent score function that captures the relation between $\bbW$ and the signals, and $\mbD$ denotes the set of adjacency matrices corresponding to a DAG.

The optimization problem in \eqref{eq:dag_learning} is challenging to solve due to the non-convex and combinatorial nature of the constraint $\bbW \in \mbD$. Recent advances advocate recovery of DAG structure by replacing this constraint with an \emph{acyclicity condition} of the form $h(\bbW) = 0$, where $h: \reals^{d \times d} \mapsto \reals$ is a differentiable function whose zero level set corresponds to $\mbD$.
This approach was pioneered in~\cite{zheng2018dags} via the acyclicity function
\begin{equation}\label{eq:notears}
	h_{\mathrm{notears}}(\bbW) = \tr \left( e^{\bbW \circ \bbW} \right) - d,
\end{equation}
where $\circ$ denotes the Hadamard (entry-wise) product.
More recently,~\cite{bello2022dagma} proposed an alternative acyclicity characterization 
\begin{equation}\label{eq:dagma}
		h_{\mathrm{dagma}}^s(\bbW) = d \log (s) - \log\det \left( s\bbI - \bbW \circ \bbW \right),
\end{equation}
with $s \in \reals_+$.
The log-determinant function alleviates numerical issues and has been shown to outperform prior relaxations.

Replacing the combinatorial constraint $\bbW \in \mbD$ with the acyclicity condition $h(\bbW) = 0$ constitutes a significant advancement, enabling the use of standard continuous optimization methods to learn the DAG structure.
However, the presence of the term $\bbW \circ \bbW$ renders these functions non-convex, still posing important challenges to recovering the true DAG.
To overcome this limitation, we henceforth 
assume $\bbW$ \emph{has non-negative weights} and propose: (i) a convex alternative to \eqref{eq:dagma}; and (ii) a method to obtain the global minimizer of an optimization problem equivalent to \eqref{eq:dag_learning}.   


\section{Non-negative DAG structure learning}
We tackle the problem of learning the DAG structure by assuming that the entries of $\bbW$ are non-negative. The restriction is still relevant to binary DAGs and other pragmatic settings; see e.g.,~\cite{seifert2023causal}. This additional structure is crucial for simplifying the optimization problem in \eqref{eq:dag_learning}, allowing us to characterize the acyclicity of the graph using a convex function. 

When $\bbX$ adheres to a linear SEM, a common choice for the score function is $F(\bbW,\bbX) = \frac{1}{2n} \| \bbX - \bbW^\top \bbX \|_F^2 + \alpha \| \bbW \|_1$, which balances a data-fidelity term with $\ell_1$ norm regularization to encourage sparse solutions.
This trade-off is controlled by the tunable weight $\alpha \in \reals_+$.
Using this convex score function, a continuous acyclicity constraint, and the entrywise non-negativity of $\bbW$, the DAG structure learning problem can be alternatively formulated as
\begin{alignat}{3}\label{eq:nonneg_dag_learning}
	\!\!&\! \hbW = && \arg\min_{\bbW} \
	&& \left\{\frac{1}{2n}\| \bbX - \bbW^\top\bbX \|_F^2 + \alpha \sum_{i,j=1}^d W_{ij} \right\}  \nonumber \\
	\!\!&\! && \mathrm{s. to} && 
	\bbW \geq 0, \;\; h(\bbW) = 0,
\end{alignat}
where $h(\bbW)$ denotes an acyclicity function of interest, and the $\ell_1$ norm is replaced by $\sum_{i,j=1}^d W_{ij}$ due to the non-negativity of $\bbW$. The acyclicity constraint renders the optimization problem in \eqref{eq:nonneg_dag_learning} non-convex, an issue that is dealt with next.


\subsection{Convex acyclicity functions via non-negative matrices}
Relying on a smooth acyclicity constraint to ensure that $\bbW$ is cycle-free is central to modern DAG learning methods.
As discussed in~\cite{zheng2018dags} an effective acyclicity function $h(\bbW)$ should be smooth, have an easy to compute gradient, and satisfy $h(\bbW) = 0$ if and only if $\bbW\in\mathbb{D}$. Harnessing the non-negativity of $\bbW$ and inspired by \eqref{eq:dagma}, we introduce a \emph{convex} acyclicity function that meets these criteria.

\begin{proposition}\label{prop:cvx_logdet}
	For any matrix $\bbW \in \reals_+^{d \times d}$ whose spectral radius is bounded by $\rho(\bbW) < s$ with $s \in \reals_+$, define
	\begin{equation}\label{eq:cvx_logdet}
		h_{ldet}(\bbW) := d\log(s) - \log\det(s\bbI - \bbW),
	\end{equation}
	with gradient $\nabla h_{ldet} (\bbW) = \left(s\bbI - \bbW \right)^{-\top}$.
	Then, $h_{ldet}(\bbW) \geq 0$ 
for every $\bbW$ such that $\rho(\bbW) < s$, 
 and $h_{ldet}(\bbW) = 0$ if and only if $\bbW\in\mathbb{D}$.
\end{proposition}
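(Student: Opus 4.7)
The plan is to reduce $h_{ldet}(\bbW)$ to an absolutely convergent power series whose non-negative terms will expose both properties at once. First, factoring $s\bbI - \bbW = s(\bbI - \bbW/s)$ and using multiplicativity of $\det$, I rewrite
\begin{equation*}
h_{ldet}(\bbW) = -\log\det(\bbI - \bbW/s).
\end{equation*}
Since $\rho(\bbW/s) = \rho(\bbW)/s < 1$, the standard trace--log identity (obtained by Schur-triangularizing $\bbW$ and applying the scalar expansion $-\log(1-z) = \sum_{k\geq 1} z^k/k$ to each eigenvalue of $\bbW/s$) yields
\begin{equation*}
h_{ldet}(\bbW) \;=\; \sum_{k=1}^{\infty} \frac{\tr(\bbW^k)}{k\,s^k}.
\end{equation*}

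Non-negativity then follows immediately: since $\bbW$ is entrywise non-negative, every power $\bbW^k$ is entrywise non-negative as well, so $\tr(\bbW^k)\geq 0$ for every $k\geq 1$ and hence $h_{ldet}(\bbW)\geq 0$.

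For the equivalence with acyclicity I would exploit the combinatorial identity
\begin{equation*}
\tr(\bbW^k) \;=\; \sum_{i_1,\ldots,i_k} W_{i_1 i_2}\, W_{i_2 i_3}\cdots W_{i_{k-1} i_k}\, W_{i_k i_1},
\end{equation*}
i.e., the weighted count of closed walks of length $k$. If $\bbW\in\mbD$, a topological ordering of the vertices renders $\bbW$ strictly triangular and hence nilpotent, so $\tr(\bbW^k)=0$ for every $k\geq 1$ and $h_{ldet}(\bbW)=0$. Conversely, if $h_{ldet}(\bbW)=0$, non-negativity of every series term forces $\tr(\bbW^k)=0$ for all $k\geq 1$; combined with $W_{ij}\geq 0$ this means no closed walk has a strictly positive weight product, so no directed cycle exists in the support graph of $\bbW$, i.e., $\bbW\in\mbD$. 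The gradient claim is a one-line application of the standard identity $\nabla_{\bbW}\log\det(s\bbI-\bbW) = -(s\bbI-\bbW)^{-\top}$ together with the minus sign in \eqref{eq:cvx_logdet}.

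The main obstacle will be the ``zero implies acyclic'' direction, where the hypothesis $\bbW\geq 0$ must be used essentially: it is what allows me to pass from the vanishing of all spectral traces to the absence of directed cycles in the \emph{support graph}. Without entrywise non-negativity, contributions of different closed walks to $\tr(\bbW^k)$ could cancel even in the presence of cycles, and the argument would collapse. Convergence of the power series, by contrast, is automatic from the assumed spectral-radius bound $\rho(\bbW)<s$ and needs no further regularity.
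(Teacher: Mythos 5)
Your proof is correct, but it follows a genuinely different route from the paper's. The paper also reduces to $-\log\det(\bbI - s^{-1}\bbW)$, but then works spectrally: it applies Jensen's inequality to $\sum_i \log\lambda_i(\bbI-\bbW)$ to bound $\log\det(\bbI-\bbW)$ by $d\log\left(\tr(\bbI-\bbW)/d\right)$, uses only $\tr(\bbW)\ge 0$ to conclude non-negativity, and then disposes of the equivalence $h_{ldet}(\bbW)=0 \Leftrightarrow \bbW\in\mbD$ by asserting that the log-determinant vanishes iff all eigenvalues of $\bbW$ are zero, i.e.\ $\bbW$ is nilpotent, citing prior work for the nilpotency--acyclicity link. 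Your power-series argument $h_{ldet}(\bbW)=\sum_{k\ge 1}\tr(\bbW^k)/(k\,s^k)$ instead uses entrywise non-negativity termwise, which buys you two things: the ``$h=0$ implies acyclic'' direction becomes self-contained (vanishing of every non-negative term forces $\tr(\bbW^k)=0$, and the closed-walk interpretation then rules out cycles in the support, rather than deferring to a citation), and the argument sidesteps the fact that the eigenvalues of a non-negative matrix may be complex, which the paper's Jensen step glosses over by treating $\lambda_i(\bbI-\bbW)$ as if they were real. The paper's route is shorter and uses a weaker consequence of non-negativity ($\tr(\bbW)\ge 0$ alone) for the inequality half, but your version is the more careful and more elementary of the two; your identification of where $\bbW\ge 0$ is essential (preventing cancellation between closed walks) is exactly the right point to stress.
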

\begin{proof}
We start by rewriting $h_{ldet}$ in the equivalent form
\begin{align}
h_{ldet}(\bbW) &= d\log(s) -\log(s^d) - \log\det(\bbI - s^{-1}\bbW) \nonumber \\
&= - \log\det(\bbI - s^{-1}\bbW).  \nonumber
\end{align}
From the bounded spectral radius $\rho(\bbW) < s$, it follows that $\log\det(\bbI - s^{-1}\bbW)$ is well defined for every $\bbW$, so for the rest of the proof we set $s = 1$ without loss of generality.
	
First, we show that $h_{ldet}(\bbW) \geq 0$ for every non-negative $\bbW$, and then establish $h_{ldet}(\bbW) = 0\Leftrightarrow\bbW\in\mathbb{D}$. To that end, we start by showing that $\log\det(\bbI - \bbW) \leq 0$. With $\lambda_i(\bbB)$ denoting the $i$-th eigenvalue of some matrix $\bbB$, we have
\begin{align}
		\log\det &(\bbI \!-\! \bbW) \! = \! \sum_{i=1}^d \! \log \left( \lambda_i(\bbI \!-\!  \bbW) \right) \! = \! d\sum_{i=1}^d \! \frac{\log \left( \lambda_i(\bbI - \bbW) \right)}{d} \nonumber \\
		&\leq d \log \left( \sum_{i=1}^d \frac{ \lambda_i(\bbI - \bbW) }{d} \right) = d \log \left( \frac{\tr (\bbI - \bbW)}{d} \right), \nonumber
	\end{align}
where the inequality follows from Jensen's inequality for concave functions.
	
Next, we leverage that $\bbW \geq 0$, and hence, $\tr(\bbW) \geq 0$, to obtain the bound $\tr(\bbI - \bbW) = \tr(\bbI) - \tr(\bbW) \leq \tr(\bbI)=d$. Combining this with the monotonicity of the logarithm renders
	\begin{equation}
		\log\det(\bbI - \bbW) \leq d \log \left( \frac{\tr (\bbI - \bbW)}{d} \right) \leq d \log\left( \frac{\tr(\bbI)}{d} \right) = 0. \nonumber
	\end{equation}
Therefore, $h_{ldet}(\bbW) = - \log\det(\bbI - \bbW) \geq 0$, as intended.
	
Finally, note that $h_{ldet}(\bbW) = - \log\det(\bbI - \bbW) = 0$ if and only if all the eigenvalues of $\bbW$ are zero, which means that $\bbW$ is a nilpotent matrix, equivalent to $\bbW$ being a DAG~\cite{bello2022dagma}.
\end{proof}


Ensuring the acyclicity of $\bbW$ using a convex function such as $h_{ldet}$ offers significant advantages. First, the convexity of $h_{ldet}$ renders \eqref{eq:nonneg_dag_learning} an \emph{abstract convex optimization problem~\cite{boyd2004convex}}, enabling us to reliably recover the global minimum. To see this, note that under the conditions of \cref{prop:cvx_logdet}, the feasible set defined by $h_{ldet}(\bbW) = 0$ is a convex set and, in fact, is equivalent to the feasible set of the convex constraint $h_{ldet}(\bbW) \leq 0$. In turn, recovering the global minimum provides new opportunities to characterize the estimate $\hbW$, a promising research direction that we start pursuing next, leaving a more in-depth analysis as future work. 
Moreover, unlike the acyclicity functions in \eqref{eq:notears} and \eqref{eq:dagma}, $h_{ldet}$ does not have stationary points at DAGs, meaning $\nabla h_{ldet} (\bbW^\star) \neq 0$ for $\bbW^\star \in \mbD$.
This avoids algorithmic issues highlighted in~\cite{wei2020dags}.

Similar to \cref{prop:cvx_logdet}, when $\bbW$ is non-negative, a convex alternative to \eqref{eq:notears} is given by
\begin{equation}\label{eq:cvx_matexp}
    h_{\mathrm{mexp}}(\bbW) = \tr \left( e^{\bbW} \right) - d,
\end{equation}
where $h_{\mathrm{mexp}}(\bbW) = 0$ if and only if $\bbW \in \mbD$~\cite{wei2020dags}. While \eqref{eq:cvx_matexp} is also a convex function, acyclicity constraints based on the log-determinant have demonstrated superior performance~\cite{bello2022dagma,saboksayr2023colide}.
Indeed, we further examine how different acyclicity functions impact the DAG recovery in \cref{sec:exps}.

\subsection{DAG structure learning via method of multipliers}\label{sec:algorithm}
Considering a convex acyclicity function $h(\bbW)$, we solve the optimization problem in \eqref{eq:nonneg_dag_learning} using the method of multipliers~\cite[Ch. 4.2]{bertsekas1997dynamic}, an iterative approach based on the augmented Lagrangian tailored to constrained optimization problems.

Let the augmented Lagrangian of \eqref{eq:nonneg_dag_learning} be given by
\begin{align}
	L_c(\bbW, \lambda) = \frac{1}{2n}\| \bbX - \bbW^\top\bbX \|_F^2 &+ \alpha \sum_{i,j=1}^d W_{ij} + \lambda h(\bbW) \nonumber \\
    &+ \frac{c}{2} h(\bbW)^2,
\end{align}
where $\lambda \in \reals_+$ is the Lagrange multiplier, and $c \in \reals_+$ is a penalty parameter.
Note that the term $h(\bbW)^2$ is convex since it is a composition of a convex function and a convex and non-decreasing function~\cite{boyd2004convex}, and hence, the augmented Lagrangian is convex.
Note that no term is related to the constraint $\bbW \geq 0$ since it can be enforced through a simple projection. 
Then, at each iteration $k=1,\ldots,\kappa_{max}$, we perform the following sequence of steps.

\vspace{2mm}
\noindent
\textbf{Step 1.}
We update $\bbW^{(k+1)}$ by minimizing
\begin{equation}\label{eq:step1}
	\bbW^{(k+1)} = \arg\min_{\bbW \geq 0} L_{c^{(k)}} (\bbW, \lambda^{(k)}).
\end{equation}
Thanks to the convexity of $L_{c^{(k)}}$, we can recover the global minimum $\bbW^{(k+1)}$ with standard convex optimization methods such as projected gradient descent.

\vspace{2mm}
\noindent
\textbf{Step 2.}
The update of the Langrange multiplier $\lambda^{(k+1)}$ depends on the degree of constraint violation, given by
\begin{equation}\label{eq:step2}
	\lambda^{(k+1)} = \lambda^{(k)} + c^{(k)}h(\bbW^{(k+1)}).
\end{equation}
This update can also be interpreted as a gradient ascent step since the constraint violation corresponds to the gradient of $L_{c^{(k)}}(\bbW^{(k+1)},\lambda)$ with respect to $\lambda$.

\vspace{2mm}
\noindent
\textbf{Step 3.}
The penalty parameter $c^{(k)}$ needs to be progressively increased so the constraint $h(\bbW) = 0$ is satisfied as $\kappa_{max}\to \infty$.
A typical update scheme is given by
\begin{equation}\label{eq:step3}
	c^{(k+1)} = 
	\left\{ \begin{array}{lc} 
		\beta c^{(k)} & \mathrm{if} \;\; h(\bbW^{(k+1)}) > \gamma h(\bbW^{(k)}) \\
		c^{(k)} & \mathrm{otherwise},
	 \end{array} \right.
\end{equation}
where $0 <\gamma < 1$ and $\beta > 1$ are positive constants.
Intuitively, $c^{(k)}$ is increased only if the constraint violation is not decreased by a factor of $\gamma$.

When the sequence of iterations is completed, the estimated DAG structure is given by $\hbW = \bbW^{(\kappa_{max})}$.
The convexity of the augmented Lagrangian guarantees that $\bbW^{(k)}$ corresponds to the global minimum of \eqref{eq:step1} for every $k$.
Therefore,~\cite[Prop. 4.2.1]{bertsekas1997dynamic} guarantees that every limit point of the sequence $\bbW^{(k)}$ is a global minimum of the constrained problem in \eqref{eq:nonneg_dag_learning}.

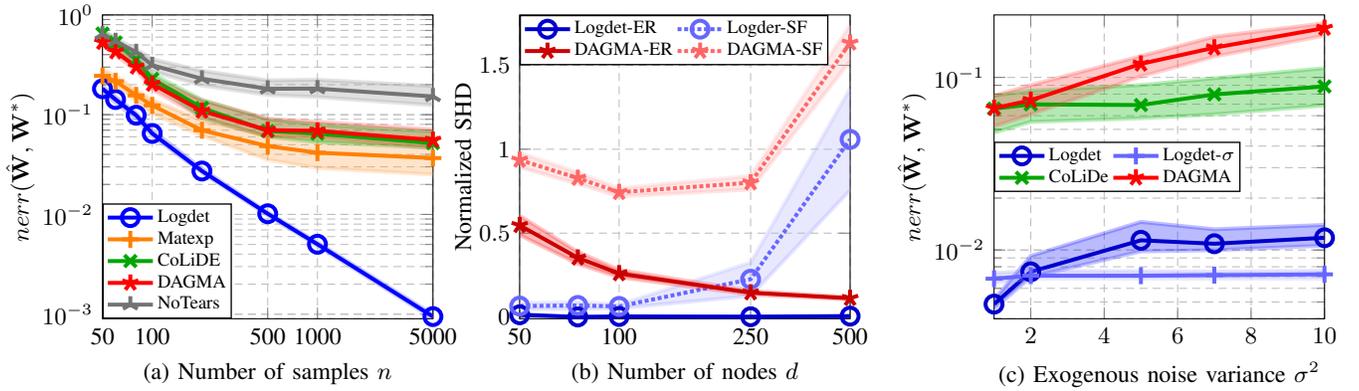
\begin{figure*}[!t]
	\centering
	\begin{subfigure}[t]{0.32\textwidth}
		\centering
		\begin{tikzpicture}[baseline,scale=1]

\pgfplotstableread{data/samples_err_med.csv}\errtable
\pgfplotstableread{data/samples_err_prctile25.csv}\prcttop
\pgfplotstableread{data/samples_err_prctile75.csv}\prctbot

\pgfmathsetmacro{\opacity}{0.3}
\pgfmathsetmacro{\contourop}{0.25}

\begin{loglogaxis}[
    xlabel={(a) Number of samples $n$},
    xmin=50,
    xmax=5000,
    xtick = {50, 100, 500, 1000, 5000},
    xticklabels = {50, 100, 500, 1000, 5000},
    ylabel={$nerr(\hbW, \bbW^*)$},
    ymin = 9e-4,
    ymax = 1,
    grid style=densely dashed,
    grid=both,
    legend style={
        at={(0, 0)},
        anchor=south west},
    legend columns=1,
    width=170,
    height=160,
    ]

    \addplot [blue!80!white, name path = logdet-bot, opacity=\contourop, forget plot] table [x=xaxis, y= MM-Logdet] \prctbot;
    \addplot [blue!70!white, name path = logdet-top, opacity=\contourop, forget plot] table [x=xaxis, y= MM-Logdet] \prcttop;
    \addplot[blue!70!white, fill opacity=\opacity, forget plot] fill between[of=logdet-bot and logdet-top];
    \addplot[blue, solid, mark=o] table [x=xaxis, y= MM-Logdet] {\errtable};

    \addplot [orange!80!white, name path = matexp-bot, opacity=\contourop, forget plot] table [x=xaxis, y= MM-Matexp] \prctbot;
    \addplot [orange!70!white, name path = matexp-top, opacity=\contourop, forget plot] table [x=xaxis, y= MM-Matexp] \prcttop;
    \addplot[orange!70!white, fill opacity=\opacity, forget plot] fill between[of=matexp-bot and matexp-top];
    \addplot[orange, solid, mark=+] table [x=xaxis, y= MM-Matexp] {\errtable};

    \addplot [green!70!black, name path = colide-bot, opacity=\contourop, forget plot] table [x=xaxis, y= CoLiDe-Fix] \prctbot;
    \addplot [green!70!black, name path = colide-top, opacity=\contourop, forget plot] table [x=xaxis, y= CoLiDe-Fix] \prcttop;
    \addplot[green!70!black, fill opacity=\opacity, forget plot] fill between[of=colide-bot and colide-top];
    \addplot[green!65!black, solid, mark=x] table [x=xaxis, y= CoLiDe-Fix] {\errtable};
    
    \addplot [red!80!white, name path = dagma-bot, opacity=\contourop, forget plot] table [x=xaxis, y=DAGMA] \prctbot;
    \addplot [red!70!white, name path = dagma-top, opacity=\contourop, forget plot] table [x=xaxis, y=DAGMA] \prcttop;
    \addplot[red!70!white, fill opacity=\opacity, forget plot] fill between[of=dagma-bot and dagma-top];
    \addplot[red, solid, mark=star] table [x=xaxis, y=DAGMA] {\errtable};

    \addplot [gray!90!white, name path = notears-bot, opacity=\contourop, forget plot] table [x=xaxis, y=NoTears] \prctbot;
    \addplot [gray!80!white, name path = notears-top, opacity=\contourop, forget plot] table [x=xaxis, y=NoTears] \prcttop;
    \addplot[gray!80!white, fill opacity=\opacity, forget plot] fill between[of=notears-bot and notears-top];
    \addplot[gray!90!black, solid, mark=Mercedes star] table [x=xaxis, y=NoTears] {\errtable};
    
    \legend{Logdet,  Matexp, CoLiDE, DAGMA, NoTears}
\end{loglogaxis}
\end{tikzpicture}
	\end{subfigure}
	\begin{subfigure}[t]{0.32\textwidth}
		\centering
		\begin{tikzpicture}[baseline,scale=1]

\pgfplotstableread{data/size_shd_mean.csv}\errtable
\pgfplotstableread{data/size_shd_std_up.csv}\prcttop
\pgfplotstableread{data/size_shd_std_down.csv}\prctbot

\pgfmathsetmacro{\opacity}{0.3}
\pgfmathsetmacro{\contourop}{0.25}

\begin{semilogxaxis}[
    xlabel={(b) Number of nodes $d$},
    xmin=50,
    xmax=500,
    xtick = {50, 100, 250, 500},
    xticklabels = {50, 100, 250, 500},
    ylabel={Normalized SHD},
    ymin = -0.01,
    ymax = 1.8,
    grid style=densely dashed,
    grid=both,
    legend style={
        at={(0, 1)},
        anchor=north west},
    legend columns=2,
    width=170,
    height=160,
    ]

    \addplot [blue, name path = logdet-er-bot, opacity=\contourop, forget plot] table [x=xaxis, y=  MM-Logdet-ER] \prctbot;
    \addplot [blue!90!white, name path = logdet-er-top, opacity=\contourop, forget plot] table [x=xaxis, y=  MM-Logdet-ER] \prcttop;
    \addplot[blue!90!white, fill opacity=\opacity, forget plot] fill between[of=logdet-er-bot and logdet-er-top];
    \addplot[blue!80!black, solid, mark=o] table [x=xaxis, y=  MM-Logdet-ER] {\errtable};

    \addplot [blue!40!white, name path = logdet-sf-bot, opacity=\contourop, forget plot] table [x=xaxis, y=  MM-Logdet-SF] \prctbot;
    \addplot [blue!30!white, name path = logdet-sf-top, opacity=\contourop, forget plot] table [x=xaxis, y=  MM-Logdet-SF] \prcttop;
    \addplot[blue!30!white, fill opacity=\opacity, forget plot] fill between[of=logdet-sf-bot and logdet-sf-top];
    \addplot[blue!60!white, densely dotted, mark=o] table [x=xaxis, y=  MM-Logdet-SF] {\errtable};

    \addplot [red, name path = dagma-er-bot, opacity=\contourop, forget plot] table [x=xaxis, y=DAGMA-ER] \prctbot;
    \addplot [red!90!white, name path = dagma-er-top, opacity=\contourop, forget plot] table [x=xaxis, y=DAGMA-ER] \prcttop;
    \addplot[red!90!white, fill opacity=\opacity, forget plot] fill between[of=dagma-er-bot and dagma-er-top];
    \addplot[red!80!black, solid, mark=star] table [x=xaxis, y=DAGMA-ER] {\errtable};

    \addplot [red!40!white, name path = dagma-sf-bot, opacity=\contourop, forget plot] table [x=xaxis, y=DAGMA-SF] \prctbot;
    \addplot [red!30!white, name path = dagma-sf-top, opacity=\contourop, forget plot] table [x=xaxis, y=DAGMA-SF] \prcttop;
    \addplot[red!30!white, fill opacity=\opacity, forget plot] fill between[of=dagma-sf-bot and dagma-sf-top];
    \addplot[red!60!white, densely dotted, mark=star] table [x=xaxis, y=DAGMA-SF] {\errtable};

    \legend{Logdet-ER, Logder-SF, DAGMA-ER, DAGMA-SF}
\end{semilogxaxis}
\end{tikzpicture}
	\end{subfigure}
	\begin{subfigure}[t]{0.32\textwidth}
		\centering
		\begin{tikzpicture}[baseline,scale=1]

\pgfplotstableread{data/vars_err_med.csv}\errtable
\pgfplotstableread{data/vars_err_prctile25.csv}\prcttop
\pgfplotstableread{data/vars_err_prctile75.csv}\prctbot

\pgfmathsetmacro{\opacity}{0.3}
\pgfmathsetmacro{\contourop}{0.25}

\begin{semilogyaxis}[
    xlabel={(c) Exogenous noise variance $\sigma^2$},
    xmin=1,
    xmax=10,
    ylabel={$nerr(\hbW, \bbW^*)$},
    ymin = 4e-3,
    ymax = .23,
    grid style=densely dashed,
    grid=both,
    legend style={
        at={(0, .5)},
        anchor=west},
    legend columns=2,
    width=170,
    height=160,
    ]

    \addplot [blue, name path = logdet-bot, opacity=\contourop, forget plot] table [x=xaxis, y=   MM-Logdet] \prctbot;
    \addplot [blue!90!white, name path = logdet-top, opacity=\contourop, forget plot] table [x=xaxis, y=   MM-Logdet] \prcttop;
    \addplot[blue!90!white, fill opacity=\opacity, forget plot] fill between[of=logdet-bot and logdet-top];
    \addplot[blue!80!black, solid, mark=o] table [x=xaxis, y=   MM-Logdet] {\errtable};

    \addplot [blue!40!white, name path = logdet-sigma-bot, opacity=\contourop, forget plot] table [x=xaxis, y=   MM-Logdet-Sigma] \prctbot;
    \addplot [blue!30!white, name path = logdet-sigma-top, opacity=\contourop, forget plot] table [x=xaxis, y=   MM-Logdet-Sigma] \prcttop;
    \addplot[blue!30!white, fill opacity=\opacity, forget plot] fill between[of=logdet-sigma-bot and logdet-sigma-top];
    \addplot[blue!60!white, solid, mark=+] table [x=xaxis, y=   MM-Logdet-Sigma] {\errtable};

    \addplot [green!70!black, name path = colide-bot, opacity=\contourop, forget plot] table [x=xaxis, y= CoLiDe-Fix] \prctbot;
    \addplot [green!70!black, name path = colide-top, opacity=\contourop, forget plot] table [x=xaxis, y= CoLiDe-Fix] \prcttop;
    \addplot[green!70!black, fill opacity=\opacity, forget plot] fill between[of=colide-bot and colide-top];
    \addplot[green!65!black, solid, mark=x] table [x=xaxis, y= CoLiDe-Fix] {\errtable};
    
    \addplot [red!80!white, name path = dagma-bot, opacity=\contourop, forget plot] table [x=xaxis, y=DAGMA] \prctbot;
    \addplot [red!70!white, name path = dagma-top, opacity=\contourop, forget plot] table [x=xaxis, y=DAGMA] \prcttop;
    \addplot[red!70!white, fill opacity=\opacity, forget plot] fill between[of=dagma-bot and dagma-top];
    \addplot[red, solid, mark=star] table [x=xaxis, y=DAGMA] {\errtable};
    
    \legend{Logdet, Logdet-$\sigma$, CoLiDe, DAGMA}
\end{semilogyaxis}
\end{tikzpicture}
	\end{subfigure}
		\vspace{-0.15cm}
	\caption{Evaluation of the proposed DAG structure learning method across different scenarios. a) reports the error of $\hbW$ as the number of samples increases.  b) presents the normalized SHD between $\hbW$ and the true DAG structure as the number of nodes increases. c) illustrates the error of $\hbW$ for different values of the variance of the exogenous input $\bbZ$.}
    \vspace{-.4cm}
    \label{fig:exps}
\end{figure*}

Finally, we demonstrate that the above algorithm can recover the true DAG structure $\bbW^*$ in the infinite sample regime, i.e., when the distribution of the random vector $\bbx$ is known.
To that end, replace the score function in \eqref{eq:nonneg_dag_learning} with
\begin{equation}\label{eq:loss_expectation}
    \bar{F}(\bbW, \bbx) = \mbE_\bbx \left[ \left\| \bbSigma_{\bbz}^{-\frac{1}{2}} \left( \bbI - \bbW^\top \right) \bbx \right\|_2^2 \right].
\end{equation}
Then, the next theorem guarantees the recovery of $\bbW^*$.

\begin{theorem}\label{th:recoverability}
Consider the score function $\bar{F}(\bbW, \bbx)$ in \eqref{eq:loss_expectation} and the convex acyclicity function $h_{ldet}(\bbW)$ from \eqref{eq:cvx_logdet}. Let $\bbx$ be a random vector following a linear SEM with non-negative DAG $\bbW^*\geq 0$ and exogenous input $\bbz$ with covariance $\bbSigma_\bbz$ known up to a scaling factor. Then, the estimate $\hbW$ from solving
    \begin{equation}\label{eq:expectation_dag_learning}
        \min_{\bbW} \; \bar{F}(\bbW, \bbx) \quad \mathrm{s.to} \quad \bbW \geq 0, \; h_{ldet}(\bbW) = 0,
    \end{equation}
    with the iterates \eqref{eq:step1}-\eqref{eq:step3}, satisfies
    $\hbW = \bbW^*$.
\end{theorem}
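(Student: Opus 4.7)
The plan is to exploit the convex structure of \eqref{eq:expectation_dag_learning} so that the iterates \eqref{eq:step1}-\eqref{eq:step3} converge to a unique global minimizer, and then to identify that minimizer as $\bbW^*$. Setting $\bbS:=\mbE[\bbx\bbx^\top]$, the population score expands to $\bar{F}(\bbW,\bbx) = \tr\!\bigl(\bbSigma_\bbz^{-1}(\bbI-\bbW^\top)\bbS(\bbI-\bbW)\bigr)$, a strictly convex quadratic in $\bbW$ since both $\bbSigma_\bbz$ and $\bbS$ are positive definite. Combined with \cref{prop:cvx_logdet} and the convexity of $\{\bbW\geq 0\}$, \eqref{eq:expectation_dag_learning} is an abstract convex program, so as in \cref{sec:algorithm} the convergence result~\cite[Prop.~4.2.1]{bertsekas1997dynamic} ensures that every limit point of the iterates is a global minimizer; strict convexity of $\bar{F}$ then makes this minimizer unique.

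The second step is to certify that $\bbW^*$ attains this minimum. Feasibility is immediate: $\bbW^*\geq 0$ by hypothesis, and $\bbW^*\in\mbD$ together with \cref{prop:cvx_logdet} give $h_{ldet}(\bbW^*)=0$. Substituting the SEM identity $(\bbI-(\bbW^*)^\top)\bbx=\bbz$ into $\bar{F}$ yields $\bar{F}(\bbW^*,\bbx)=\mbE[\|\bbSigma_\bbz^{-1/2}\bbz\|_2^2]=\tr(\bbSigma_\bbz^{-1}\bbSigma_\bbz)=d$. To verify that $d$ is indeed the minimum value on the feasible set, I would argue via the arithmetic-geometric mean inequality: since every feasible $\bbW$ is a non-negative nilpotent DAG, $\det(\bbI-\bbW)=1$, and hence
\[
    \det\!\bigl((\bbI-\bbW^\top)\bbS(\bbI-\bbW)\bigr) \;=\; \det\bbS \;=\; \det\bbSigma_\bbz,
\]
where the last equality uses $\bbS=(\bbI-(\bbW^*)^\top)^{-1}\bbSigma_\bbz(\bbI-\bbW^*)^{-1}$ together with $\det(\bbI-\bbW^*)=1$. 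The symmetric PSD matrix $\bbA:=\bbSigma_\bbz^{-1/2}(\bbI-\bbW^\top)\bbS(\bbI-\bbW)\bbSigma_\bbz^{-1/2}$ thus satisfies $\det\bbA=1$, and the AM-GM inequality on its eigenvalues yields $\bar{F}(\bbW)=\tr\bbA\geq d\,(\det\bbA)^{1/d}=d$. A positive rescaling of $\bbSigma_\bbz$ changes $\bar{F}$ only by an overall multiplicative constant, so the ``known up to a scaling factor'' caveat leaves both the inequality and the minimizer intact.

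Combining the two steps, $\bbW^*$ is a global minimizer of \eqref{eq:expectation_dag_learning} and, by strict convexity, the unique one; any limit point of the iterates therefore satisfies $\hbW=\bbW^*$. The most delicate ingredient I foresee is the convexity hypothesis underlying \cref{prop:cvx_logdet} and \cref{sec:algorithm}: a careful verification that $h_{ldet}$ is convex on $\{\bbW\geq 0,\,\rho(\bbW)<s\}$ (so that the feasible set is itself convex) is the lynchpin of the argument, since without it one loses both the global convergence of the method of multipliers and the uniqueness-via-strict-convexity step, and would then need to fall back on an identifiability theorem for linear SEMs with equal-variance noise to salvage the conclusion.
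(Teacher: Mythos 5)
Your proof follows the paper's skeleton for the algorithmic half (convexity of the subproblems plus \cite[Prop.~4.2.1]{bertsekas1997dynamic} to get a global minimizer of the constrained program), but it genuinely diverges on the identifiability half. The paper simply invokes \cite[Thm.~7]{loh2014high} to assert that the minimizer of $\bar F$ over non-negative DAGs is $\bbW^*$; you instead prove it directly: feasibility of $\bbW^*$, the evaluation $\bar F(\bbW^*)=\tr(\bbSigma_\bbz^{-1}\bbSigma_\bbz)=d$, and the lower bound $\bar F(\bbW)=\tr\bbA\geq d(\det\bbA)^{1/d}=d$ using $\det(\bbI-\bbW)=1$ for nilpotent $\bbW$ and $\det\bbS=\det\bbSigma_\bbz$. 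That computation is correct and is a real gain over the paper's version: it is self-contained and makes transparent exactly where the ``$\bbSigma_\bbz$ known up to scale'' hypothesis enters.

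There is, however, a gap in how you pass from ``$\bbW^*$ attains the minimum'' to ``$\hbW=\bbW^*$.'' Your AM--GM step certifies existence of $\bbW^*$ as a minimizer; equality holds iff $\bbA=\bbI$, i.e. $(\bbI-\bbW^\top)\bbS(\bbI-\bbW)=\bbSigma_\bbz$, and showing that the only non-negative DAG satisfying this is $\bbW^*$ is precisely the equal-variance identifiability content of the theorem the paper cites --- you cannot get it for free. The shortcut you propose, strict convexity of $\bar F$ plus convexity of the feasible set, does not close this: the set $\{\bbW\geq 0,\ h_{ldet}(\bbW)=0\}$ is the set of non-negative DAG adjacency matrices, which is not convex (for $d=2$, average the single-edge DAGs $1\!\to\!2$ and $2\!\to\!1$; the midpoint is a $2$-cycle with $h_{ldet}=\log(4/3)>0$), and a strictly convex function restricted to a non-convex set can have several global minimizers. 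You flag exactly this issue at the end and correctly identify the fallback (an identifiability theorem for linear SEMs with equal/known noise variances), so the repair is to make that invocation explicit for the uniqueness step --- which then also covers the convergence-of-the-multiplier-method step that both you and the paper lean on. With that patch your argument is complete and arguably more informative than the paper's.
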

\begin{proof}
    Given the convexity of the optimization problem \eqref{eq:step1} and the updates in \eqref{eq:step2} and \eqref{eq:step3}, from~\cite[Prop. 4.2.1]{bertsekas1997dynamic} it follows that $\hbW$ is the global minimizer of \eqref{eq:nonneg_dag_learning}.
    Then, from \cref{prop:cvx_logdet}, we have that $h_{ldet}(\bbW) = 0$ if and only if $\bbW \in \mbD$.
    Consequently, minimizing \eqref{eq:expectation_dag_learning} is equivalent to solving
    \begin{equation}\label{eq:expectation_comb_dag_learning}
        \tbW \;=\; \arg\min_{\bbW} \; \bar{F}(\bbW, \bbx) \quad \mathrm{s.to} \quad \bbW \geq 0, \; \bbW \in \mbD,
    \end{equation}
    so $\hbW = \tbW$.
    
    Finally, from~\cite[Thm. 7]{loh2014high} it follows that the global minimizer of \eqref{eq:expectation_comb_dag_learning}, $\tbW$, corresponds to the true DAG structure, $\bbW^*$.
    Therefore, the proof is concluded since $\hbW = \tbW = \bbW^*$.
\end{proof}

\cref{th:recoverability} guarantees that our proposed method recovers the true DAG $\bbW^*$ when the distribution of $\bbx$ is known, which is tantamount to assuming access to infinite observations $n$.
While such an assumption is unlikely to be satisfied in practical settings, this preliminary recoverability guarantee brings to light the potential benefits of relying on convex acyclicity functions to learn the DAG structure.  
In the following section, we complete the assessment of our method by empirically evaluating its performance in the finite sample regime.

\section{Numerical experiments}\label{sec:exps}
We now evaluate the performance of the proposed method across different scenarios and compare it with state-of-the-art alternatives.
The code with the proposed method and all implementation details is publicly available on GitHub\footnote{\url{https://github.com/reysam93/cvx_dag_learning}}.

We measure the performance in terms of the normalized Frobenius error, calculated as
\begin{equation}
    nerr(\hbW, \bbW^*) = \| \bbW^* - \hbW \|_F^2 / \| \bbW^* \|_F^2,
\end{equation}
and the structural Hamming distance (SHD) normalized by the number of nodes, which counts the number of edges in $\hbW$ that need to be changed to match the support of $\bbW^*$.
As baselines, we consider the following relevant non-convex approaches: NO TEARS~\cite{zheng2018dags}, DAGMA~\cite{bello2022dagma}, and CoLiDE~\cite{saboksayr2023colide}.
In addition, we consider the convex constraints \eqref{eq:cvx_logdet} and \eqref{eq:cvx_matexp}, respectively denoted as ``Logdet'' and ``Matexp'', in \cref{fig:exps}.
Unless otherwise stated, we simulate Erd\H{o}s-Rényi (ER) graphs with $100$ nodes and average degree of 4, as well as $1000$ samples following a linear SEM with $\bbz$ sampled from a standard Gaussian distribution.
We report the median and the 25th and 75th percentiles of 100 independent realizations.

\vspace{2mm}
\noindent
\textbf{Test case 1.}
The first experiment examines the error $nerr(\hbW, \bbW^*)$ of different methods as the number of samples $n$ increases, as indicated on the x-axis.
From the results in \cref{fig:exps} (a), it is evident that leveraging a convex acyclicity constraint consistently leads to superior performance.
Moreover, while the error for the considered baselines saturates, the error associated with our convex constraint ``Logdet'' approaches 0 as the number of samples increases. This behavior aligns with \cref{th:recoverability}, showcasing the potential of our convex method for recovering the true DAG structure. Additionally, although both ``Matexp'' and ``Logdet'' use convex constraints, the latter outperforms the former.
This highlights the benefits of using the log determinant to ensure acyclicity, which coincides with the conclusions drawn for non-convex approaches~\cite{bello2022dagma}. 

\vspace{2mm}
\noindent
\textbf{Test case 2.} 
Next, \cref{fig:exps} (b) depicts the normalized SHD as the number of nodes $d$ increases, with the number of samples fixed at $n=1000$.
This experiment considers both ER graphs and scale-free (SF) graphs.
Our results demonstrate that ``Logdet'' consistently outperforms ``DAGMA''.
Since both methods utilize an acyclicity constraint based on the log determinant [cf. \eqref{eq:dagma} and \eqref{eq:cvx_logdet}], the difference in performance underscores the advantages of exploiting the non-negativity and convexity of $h_{ldet}(\bbW)$. 
Specifically, for ER graphs, our method achieves a normalized SHD of zero, showcasing that it accurately recovers the support of the true DAG even in the small-sample regime.
Regarding the SF graphs, the performance of the non-convex ``DAGMA'' method deteriorates significantly more than that of ``Logdet'' as the number of nodes increases.
Overall, the experiment also suggests that estimating non-negative SF DAGs is more challenging than estimating ER.

\vspace{2mm}
\noindent
\textbf{Test case 3.}
To conclude the numerical evaluation, we assume  the covariance of the exogenous input is given by $\bbSigma_\bbz = \sigma^2\bbI$, and evaluate the performance as $\sigma^2$ increases. We also include a variant of \eqref{eq:nonneg_dag_learning}, which leverages the knowledge of $\bbSigma_\bbz$, referred to as ``Logdet-$\sigma$''.
Consistent with previous results, \cref{fig:exps} (c) demonstrates that our proposed method based on the convex log determinant consistently outperforms the non-convex approaches.
Moreover, while the performance of ``DAGMA'' and ``Logdet'' deteriorates for larger values of $\sigma^2$, the error of ``CoLiDE'' and ``Logdet-$\sigma$'' remains stable, highlighting the advantages of leveraging information about $\bbSigma_\bbz$ or, if unavailable, estimating it as in ``CoLiDE''~\cite{saboksayr2023colide}.

\section{Conclusion}\label{S:conc}
This paper studied the prominent task of learning the structure of a DAG from a set of nodal observations using a linear SEM.
Leveraging the assumed non-negativity of $\bbW$, we framed DAG learning as a continuous optimization problem and introduced the first method that guarantees the estimated DAG corresponds to a global minimizer.
Specifically, we demonstrated that a convex constraint based on the log determinant ensures acyclicity when the graph does not contain negative edges.
The convexity of the acyclicity function enables us to formulate an abstract convex optimization problem, which we solve with an iterative algorithm based on the method of multipliers, recovering the global minimum.
After establishing preliminary recovery guarantees by demonstrating that our method recovers the ground truth DAG in the infinite sample size regime, we validated its performance through reproducible numerical experiments on synthetic data, where it outperformed state-of-the-art methods.

\newpage

\bibliographystyle{IEEEbib}
\bibliography{myIEEEabrv,biblio}

\end{document}